\documentclass[11pt,reqno]{amsart}
\usepackage[foot]{amsaddr}
\usepackage{amsthm,amssymb,amsfonts}
\usepackage{amsmath}
\usepackage{amscd} 
\usepackage{setup}

\usepackage[normalem]{ulem}
\usepackage[all]{xy}

\usepackage{thmtools,thm-restate}

\theoremstyle{definition}
\newtheorem{definition}{Definition}
\newtheorem{proposition}{Proposition}
\newtheorem{remark}{Remark}
\newtheorem{exmps*}{Examples}

\usepackage[utf8]{inputenc} % allow utf-8 input
\usepackage[T1]{fontenc}    % use 8-bit T1 fonts
\usepackage{hyperref}       % hyperlinks
\usepackage{url}            % simple URL typesetting
\usepackage{booktabs}       % professional-quality tables
\usepackage{nicefrac}       % compact symbols for 1/2, etc.
\usepackage{microtype}      % microtypography
\usepackage{xcolor}         % colors
\usepackage{natbib}

\newcommand{\R}{\mathbb{R}}
\newcommand{\D}{\mathcal{D}}
\newcommand{\Gen}{\mathrm{Gen}}

\newcommand{\E}{\mathbb{E}}

\newcommand{\Xor}{\mathrm{XOR}}
\newcommand{\Sum}{\mathrm{Sum}}

\usepackage{tikz}
\usetikzlibrary{arrows.meta,positioning,calc}
\theoremstyle{plain}

\usepackage{xcolor}

\usepackage{algorithm}
\usepackage{algpseudocode}
\usepackage{graphicx}

\title[Math.~Perspective on Genetic Algorithms with Optimization Guided Operators]{Mathematical Perspective on Genetic Algorithms with Optimization Guided Operators}

\author{Anna Brandenberger \and Ilan Doron-Arad \and Elchanan Mossel}
\address{Department of Mathematics, MIT}
\email{\{abrande,ilanda,elmos\}@mit.edu}

\begin{document}
	
	\maketitle

	\begin{abstract}
	Recent work in ML applies genetic algorithms at inference time to iteratively improve solutions to optimization problems. The basic mutation and recombination operators involved are qualitatively different from those studied classically. Mutations are no longer random; an ML algorithm mutates a solution with the goal of improving an objective. Similarly, recombination is not based on random collages of parent solutions. Instead, it is an ML optimization-based operator whose goal is to synthesize improved solutions from its inputs.
    Thus, these mutation and recombination operators are more likely to improve the objective, but their computational cost is much higher.
		
    We introduce a general model of genetic algorithms and formulating optimization
    in this model as a query-complexity problem, using the language of reinforcement
    learning. We then study specialized models.
    We show that some optimization problems require generation, mutation, and recombination to be solved. We then obtain qualitatively tight algorithms for a family of problems within this framework that captures the nontrivial role of {\em diversity} in the solution pool, a key feature of practical ML genetic algorithms.
	\end{abstract}

\section{Introduction}

Recent work has shown strong empirical success in using large language models (LLMs) as iterative search procedures for mathematical and algorithmic discovery. 
Important examples include AlphaEvolve~\citep{novikov2025alphaevolve}, FunSearch~\citep{romera2024mathematical},
Mind Evolution~\cite{lee2025evolving},
EvoPrompt \citep{guo2023connecting} 
and PromptBreeder~\citep{fernando2023promptbreeder}.
These processes are often described as {\em genetic algorithms}: the operations executed by
an LLM commonly include: (i) generating a fresh solution to the given problem; (ii) mutating
a given solution; and (iii) recombining two or more solutions to synthesize a new solution.
While described in the language of classical genetic algorithms (GAs), these LLM operations are
quite different from the random mutations and recombinations used by standard genetic
algorithms.

In classical GAs, \textit{mutation} and \textit{recombination} are stochastic, syntactic
operators over a fixed representation such as bit strings, real vectors,
permutations, or program trees~\citep{holland1975adaptation,goldberg1989genetic,koza1994genetic}. Both operators are uninformed: they shuffle and perturb symbols without any understanding of what the genome means. Then, once the population is large after many generations, selection pressure sorts the results.
The division of labor in classical GAs is clean: recombination \emph{exploits}
existing variation by recombining what the population has already discovered,
while mutation \emph{explores} by injecting truly new material
\citep{eiben2015introduction}. 
 
In contrast, in LLM-driven evolution,
both operators are reimplemented as prompted LLM calls. 
The candidate is usually something the model can natively read and
write: a Python function, a prompt, a heuristic, a reward function, or a
chain-of-thought template. 
Mutating a parent now produces a new candidate that is semantically informed: the LLM is prompted with instructions such as ``here is a solution that scores $X$; produce a modified version that
might score better'', sometimes augmented with the task description, recent
failure traces, or the parent's behavioral signature
\citep{lehman2023evolution,meyerson2024language}.
{Recombination} likewise goes beyond positional crossover, it now implements a \textit{semantic synthesis}: given multiple
parents, an LLM is asked to synthesize a candidate that combines their strengths or learns from their differences.
For example, FunSearch
\citep{romera2024mathematical} and AlphaEvolve \citep{novikov2025alphaevolve}
sample multiple high-scoring programs from an island and concatenate them in
the prompt as exemplars before asking for a new one, while Promptbreeder
\citep{fernando2023promptbreeder} and EvoPrompt \citep{guo2023connecting}
explicitly ask the model to ``cross over'' two prompts into a hybrid. Thus, mutation and recombination are better viewed as learned proposal
operators than as blind random perturbations.

This increased flexibility has a computational cost.
Classical operators are essentially free, so GAs
run millions of cheap evaluations over large populations~\citep{goldberg1989genetic}. LLM operators are expensive, so modern systems often shrink the population to dozens or low hundreds of candidates to accommodate a limited inference-time budget. 
This shift raises a fundamental question: under a fixed budget of expensive operator calls, how should one allocate queries among generation, mutation, and recombination to approach the performance of the best possible search strategy? Answering this question requires a framework that treats these operators on equal footing and makes their trade-offs explicit. 
	
\subsection*{Our Contribution}
	
The above motivate this work, in which we develop a theoretical
framework for analyzing ML-guided genetic algorithms. Our contributions are as
follows.

First, we introduce the generation--mutation--recombination (GMR) framework,
which models generation from scratch, mutation of one candidate, and
recombination of several candidates as stochastic operators. 
We formulate optimization in this framework as a query-complexity problem: for a
reference budget $n$, how many queries are needed to match, up to accuracy $\varepsilon$,
the value attainable by the best $n$-query policy, and how should the query budget be
allocated among generation, mutation, and recombination? This is aligned with the
empirical challenge of deciding how to spend a fixed budget of LLM calls on a hard
optimization problem.

We give a simple one-dimensional example in which optimal policies can be
analyzed explicitly. This example shows that optimization-guided mutation or
recombination can achieve much faster improvement than averaging-based
operators.
We then show that generation, mutation, and recombination can each be
indispensable. 
We give a simple two-dimensional construction in which an
\(N\)-query policy can get close to the optimal \(N\)-query benchmark, but any policy
that uses too few generations, too few mutations, or too few recombinations
cannot reach the corresponding approximation target.

Second, we introduce an instantiation of the model that isolates the role of diversity in ML-guided genetic search. This model is based on \textit{parity learning} and captures learning an unknown hidden string given valid linear equations, which is difficult if an insufficiently amount of equations can be generated/stored. Concretely, candidates are valid linear equations \( a \cdot x = b\) for an unknown \( x \in \{0,1\}^n\): generation uniformly randomly samples new valid equations, and recombination XORs two existing equations coordinate-wise. The \textit{diversity} of a policy is defined in terms of the size of its live-pool, that is, the number of queried candidates that can be retained to affect the actions of the policy; candidates that are deleted from the live-pool cannot influence the future actions of the policy. 

Informally, our main result for the parity learning model states that a \textit{linear} live-pool size is necessary for any policy to be able to generate the target \(x\) in subexponential time; a concrete polynomial time algorithm also shows that a linear (up to constants) sized live-pool is sufficient. 
The formal statements are given in section~\ref{sec:parity}.

\subsection*{Organization}
Section~\ref{sec:model} defines the GMR framework and the query-complexity objective. Section~\ref{sec:related_work} discusses related work.  Section~\ref{sec:one-dimensional-gaussian} gives a one-dimensional example showing how optimization-guided operators can outperform averaging-type operators. In Section~\ref{sec:allOps} we give a simple construction showing that generation, mutation, and recombination can each be necessary. 
Section~\ref{sec:parity} studies the parity learning instantiation, proving that diversity in the set of generated candidates is necessary. Section~\ref{sec:discussion} concludes. Additional examples, full proofs, and illustrative simulations are deferred to the appendices.

\section{A Formal Model of Genetic Algorithms}
\label{sec:model}
    
We give the formal definition of the model below. At a high level, generation
captures producing a fresh candidate from the model; mutation describes
refinements of one existing solution; recombination depicts
synthesizing a new solution from two or more existing ones. As our main
objective, we study the framework through the lens of query complexity.
Concretely, for a given budget \(n\), we ask how many queries a policy
needs in order to achieve performance within \(\varepsilon\) of the optimal value
attainable in \(n\) rounds. This viewpoint is especially natural in our setting,
where a policy interacts with a hidden optimization problem and may observe
only limited feedback about the candidates it produces. The objective \(U\) plays the role of the reward function: it assigns a scalar
value to each queried candidate, and the policy aims to maximize the best value
seen over its query budget.

\begin{definition}
	A \emph{generation--mutation--recombination} (GMR) framework is a tuple
	\[
	\mathfrak G=(X,U,(\mathcal D_k)_{k \in K},\mathsf{Obs}),
	\]
	where \(X\) is a space of candidate solutions, \(U:X\to\R\) is a hidden
	objective, \(K \subseteq \mathbb{Z}_{\geq 0}\) with $0 \in K$, and, for each \(k \in K\),
	\(\mathcal D_k\) is a collection of \(k\)-input distribution rules. Each
	\(D\in\mathcal D_k\) assigns to every \((x_1,\ldots,x_k)\in X^k\) a
	distribution \(D(x_1,\ldots,x_k)\) over \(X\). For \(k=0\), 
    $D_0$ is the unique distribution for generating a new solution independently.
	
	The case \(k=0\) corresponds to {\bfseries generation} from scratch. The case
	\(k=1\) corresponds to {\bfseries mutation} or refinement of one existing
	candidate. The cases \(k\ge 2\) correspond to {\bfseries recombination} or
	synthesis from several existing candidates. All expectations below are assumed
	to be well-defined.

    \medskip
    \noindent {\bf Feedback model:} The map \(\mathsf{Obs}\) specifies the feedback observed
	by the policy. After a candidate \(z_t\in X\) is produced, the policy observes
	\(
	o_t=\mathsf{Obs}(z_t;z_1,\ldots,z_{t-1}).
	\)
	The policy need not observe \(z_t\) itself; feedback may include scores,
	labels, or similarities to previously generated candidates. For example, full
	feedback is the case \(o_t=(z_t,U(z_t))\), while objective-only feedback is the
	case \(o_t=U(z_t)\). In the latter case, the policy may still refer to
	previously queried candidates by index when applying mutation or recombination.
	
	\medskip
    \noindent {\bf Policies:} Let \(\Pi_{\mathfrak G}^{\mathrm{obs}}\) denote the set of
	possibly randomized policies that use only the public operator families and the
	feedback history generated by \(\mathsf{Obs}\). In particular, policies do not
	access the hidden objective \(U\) except through observed feedback. A policy
	\(\pi\in\Pi_{\mathfrak G}^{\mathrm{obs}}\), at each round \(t\in\mathbb N\),
	after observing the feedback history \(o_1,\ldots,o_{t-1}\), chooses an
	integer \(k\in K\) and a distribution rule \(D\in\mathcal D_k\). If \(k=0\), it
	queries \(z_t\sim D\). If \(k\ge 1\), it chooses previous indices
	\(i_1,\ldots,i_k<t\), not necessarily distinct, and queries
	\(
	z_t\sim D(z_{i_1},\ldots,z_{i_k}).
	\)
	At round \(t=1\), only generation from some \(D\in\mathcal D_0\) is feasible.
	
	\medskip
    \noindent {\bf Query Complexity:} Define the best expected value found by a policy as
	\[
	V_n^\pi(\mathfrak G):=\E \left[\max_{t\in[n]} U(z_t)\right].
	\]
	Define the best value obtainable by an \(n\)-query policy as
	\[
	V_n^*(\mathfrak G):=
	\sup_{\pi\in\Pi_{\mathfrak G}^{\mathrm{obs}}} V_n^\pi(\mathfrak G).
	\]
	For \(\varepsilon\ge 0\), the query complexity of a policy \(\pi\) relative to
	the optimal \(n\)-round benchmark is the minimum number of queries it applies
	to get \(\varepsilon\)-close to the optimum \(n\)-query value:
	\[
	N_\varepsilon(\pi,n;\mathfrak G)
	:=
	\min\Bigl\{m\in\mathbb N:\
	V_m^\pi(\mathfrak G) \ge V_n^*(\mathfrak G)-\varepsilon\Bigr\},
	\]
	with \(N_\varepsilon(\pi,n;\mathfrak G)=\infty\) if the set is empty. The corresponding additive \(\varepsilon\)-optimal query complexity is
	\[
	N_\varepsilon^*(n;\mathfrak G)
	:=
	\inf_{\pi\in\Pi_{\mathfrak G}^{\mathrm{obs}}}
	N_\varepsilon(\pi,n;\mathfrak G).
	\]
	When the framework is fixed, we suppress it from the notation.
\end{definition}

\begin{remark}
	In our model, the value of the objective at a point \(x\in X\) can only be observed
    if \(x\) is produced by one of the allowed distributions. In particular,
	the policy cannot query an arbitrary point directly. Moreover, depending on
	\(\mathsf{Obs}\), the policy may not observe the internal candidate \(x\) at
	all. Thus, the problem is not purely one of optimizing an explicitly accessible
	search space, but also of generating candidate points and learning from the
	revealed feedback. This is natural in our setting, where points in \(X\) represent
	candidate solutions or algorithms, and arbitrary points are not
	assumed to be directly accessible.
\end{remark}

\noindent
\textbf{Examples.}
In practice, the distributions in \(\mathcal D_k\) may be interpreted
as those induced by fixed prompts or model calls for generation, mutation, and
recombination. A generation step samples \(z\sim D\) for some
\(D\in\mathcal D_0\), for example by asking the model to solve the task from
scratch. A mutation step samples \(z\sim D(x)\) for some \(D\in\mathcal D_1\) and input $x \in X$,
which may correspond to a local revision such as fixing a bug in a program,
improving one step in a proof, or refining part of a plan while keeping the rest
unchanged. A recombination step samples \(z\sim D(x_1,\ldots,x_k)\) for some
\(k\ge 2\) and \(D\in\mathcal D_k\), which may correspond to synthesizing several
partial solutions that contain different useful ingredients. For instance, in
code generation, different candidates may contain a correct data structure, a
correct update rule, and useful tests, and recombination aims to produce a
solution combining them. Similarly, in mathematical reasoning, different
candidates may contain the right reduction, the right intermediate lemma, and
the right final argument, and recombination aims to synthesize them into a
stronger solution. Specific instantiations of \(\D_k\) can yield many familiar
stochastic search and model training procedures, such as stochastic gradient
descent, bandits, evolution search and more; see details in
Appendix~\ref{sec:examples}.

	\section{Related Work}
	\label{sec:related_work}
	\subsection*{Classical Genetic Algorithms and Fitness Landscapes} 
	The formalization of search and optimization in evolutionary computation originates in the biological concept of ``fitness landscapes,'' introduced by Sewall Wright in 1932 to describe evolution as populations navigating peaks and valleys across a high-dimensional surface \citet{wright1932roles}. This paradigm was significantly expanded by Kauffman's work on self-organization and the NK model, which explores how epistasis impacts the ruggedness of optimization landscapes and influences evolutionary traversability \citet{kauffman1993origins}. Building on these biological underpinnings, classical genetic algorithms emerged as stochastic search methods to navigate these complex spaces \citet{holland1975adaptation,goldberg1989genetic,lu2014fitness}. Closely related developments in evolutionary computation include the evolution-strategies line of Rechenberg and Schwefel and later covariance-adaptive variants such as CMA-ES \citet{rechenberg1973evolutionsstrategie,schwefel1977numerische,hansen2001completely}. In these traditional models, mutation blindly perturbs an existing solution, and recombination randomly splices parent solutions. 
	
	\subsection*{Machine Learning-Guided Evolutionary Operators} 
    Evolutionary methods have long been used within machine learning, including neuroevolution, large-scale evolution strategies, neural architecture search, population-based training and model fine-tuning \citep{stanley2002evolving,salimans2017evolution,such2017deep,real2019regularized,jaderberg2017population,qiu2026evolution}. 
    The recent LLM-centered line includes program-evolution systems such as FunSearch and AlphaEvolve~\citep{romera2024mathematical,novikov2025alphaevolve,nagda2026reinforced},
    language-model evolution and crossover~\citep{lehman2023evolution,meyerson2024language},
    prompt optimization and model merging~\citep{guo2023connecting,zhao2023genetic,akiba2025evolutionary},
    and reasoning-oriented frameworks such as Lyria, FlowBoost, and PatternBoost~\citep{tang2025lyria,berczi2026flow,patternboost2024}.
	In these modern architectures, mutations and recombinations are qualitatively different from classical genetics.
    Our framework provides a theoretical query-complexity language for analyzing the orchestration of these new, directed genetic operators.
	
	\subsection*{Reinforcement Learning, MDPs, and Sequential Search}
	Our framework is also closely related to reinforcement learning. For each fixed objective \(U\), the GMR process induces a finite-horizon MDP in which the state is the observed history, the actions are admissible generation, mutation, and recombination choices, and the terminal reward is the best value found~\citep{puterman1994markov,sutton2018reinforcement}. From the learner's perspective, however, the objective \(U\) is hidden and only queried values are observed, so the policy-design problem can also be viewed as a POMDP, or, under a prior over objectives, as Bayesian reinforcement learning \citep{kaelbling1998planning,ghavamzadeh2015bayesian}. This viewpoint is natural in our broad setting, and it highlights that the framework encompasses many familiar ML problems: e.g., stochastic bandits correspond to generation-only action sets \citep{bubeck2012regret}, and stochastic approximation and first-order optimization fit mutation-only dynamics \citep{robbins1951stochastic}. 
	It also relates our framework to modern RL-based language-model training and test-time improvement pipelines~\citep{schulman2017proximal,ouyang2022training,bai2022training,shao2024deepseekmath,guo2025deepseekr1}.

\section{Beyond Averaging Operators: A One-Dimensional Example}
\label{sec:one-dimensional-gaussian}

We first give a simple one-dimensional instantiation of the GMR framework illustrating how optimization-guided operators can be much stronger than averaging-type mutation and recombination operators. Let \(\mathfrak G=(X,U,(\mathcal D_k)_{k\in K},\mathsf{Obs})\) be defined as follows. Let \(X=\mathbb R\), let \(U(x)=x\), and use full feedback, i.e., \(o_t=(z_t,U(z_t))\). Fix \(\mu,\lambda_1,\lambda_2\ge0\) and \(\sigma_M,\sigma_R>0\). Generation samples an independent standard Gaussian \(N(0,1)\) random variable, mutation maps \(x \mapsto \mu x+\sigma_M Z\), and recombination maps \((x,y) \mapsto \lambda_1x+\lambda_2y+\sigma_R Z\), where the \(Z\sim N(0,1)\) variables are independent. Thus \(K=\{0,1,2\}\), and \(\mathcal D_0,\mathcal D_1,\mathcal D_2\) respectively contain these three rules.

\begin{restatable}{theorem}{onedimgaussian}
\label{thm:one-dimensional-gaussian}
Assume \(\mu<1\) and \(\lambda_1+\lambda_2<1\). Let
\(
\alpha:=\min\left\{1,\frac{1-\mu}{\sigma_M},\frac{1-\lambda_1-\lambda_2}{\sigma_R}\right\}.
\)
Then we have \(V_n^*(\mathfrak G)=(1+o(1))\alpha^{-1}\sqrt{2\log n}\). Moreover, if either \(\mu=1\) or \(\lambda_1+\lambda_2=1\), then \(V_n^*(\mathfrak G)\ge \Omega(n)\), while if either \(\mu>1\) or \(\lambda_1+\lambda_2>1\), then \(V_n^*(\mathfrak G)\ge \exp(\Omega(n))\).
\end{restatable}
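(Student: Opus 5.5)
The upper bound uses a pathwise domination of every produced value by the largest noise variable. The lower bound is a concrete ``repeatedly refine the incumbent'' policy whose fixed point matches it.

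\emph{Upper bound.} Fix any policy $\pi$ and horizon $n$. Each round $t$ draws a fresh standard Gaussian $\xi_t$: the generated value itself if $k=0$, or the noise $Z$ of the mutation or recombination otherwise. Although the policy's choices are adaptive, $\xi_1,\ldots,\xi_n$ are i.i.d.\ $N(0,1)$, because each is drawn independently of the past. Set $M:=\max\{0,\xi_1,\ldots,\xi_n\}$. I will prove by induction on $t$ that $z_t\le M/\alpha$ deterministically.
\begin{itemize}
\item For generation, $z_t=\xi_t\le M\le M/\alpha$, since $\alpha\le 1$.
\item For mutation of $z_i$ with $i<t$, using $\mu\ge 0$ and the induction hypothesis, $z_t\le \mu M/\alpha+\sigma_M M\le M/\alpha$. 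The last inequality is equivalent to $\alpha\sigma_M\le 1-\mu$, which holds by the definition of $\alpha$.
\item For recombination, $z_t\le(\lambda_1+\lambda_2)M/\alpha+\sigma_R M\le M/\alpha$, which is equivalent to $\alpha\sigma_R\le 1-\lambda_1-\lambda_2$.
\end{itemize}
Hence $V_n^\pi\le\alpha^{-1}\E M=(1+o(1))\alpha^{-1}\sqrt{2\log n}$, uniformly over $\pi$.

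\emph{Lower bound when $\mu,\lambda_1+\lambda_2<1$.} If $\alpha=1$, generate $n$ times. Otherwise suppose, say, $\alpha=(1-\mu)/\sigma_M$; the recombination case is identical, applying recombination to the incumbent twice, so $\lambda_1+\lambda_2$ replaces $\mu$ and $\sigma_R$ replaces $\sigma_M$. Split the budget into $L$ stages of $k=\lfloor n/L\rfloor$ queries. The first stage generates; each later stage applies $k$ independent mutations to the best point $x_{\ell}$ found so far and keeps the best, so
\[
x_{\ell+1}\ge\mu x_\ell+\sigma_M G_\ell,
\]
where $G_\ell$ is the maximum of $k$ standard Gaussians. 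Unrolling gives
\[
x_L\ge\sigma_M\sum_{j<L}\mu^{j}G_{L-1-j}+\mu^{L}x_1.
\]
Take $L=L(n)\to\infty$ slowly, e.g.\ $L=\lceil\log\log n\rceil$. Then $\log k=(1-o(1))\log n$ and $\sum_{j<L}\mu^j\to(1-\mu)^{-1}$. Taking expectations gives $V_n^\pi\ge(1-o(1))\,\sigma_M(1-\mu)^{-1}\sqrt{2\log n}$, which is $(1-o(1))\alpha^{-1}\sqrt{2\log n}$. When $\mu=0$ the same argument works trivially.

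\emph{Critical and supercritical cases.} The same incumbent-mutation scheme is used with $k=2$ tries per step, so there are about $n/2$ steps and $x_{\ell+1}\ge\mu x_\ell+\sigma_M\max(Z,Z')$, where $\E\max(Z,Z')=1/\sqrt\pi>0$.
\begin{itemize}
\item If $\mu=1$, this is a random walk with positive drift, and taking expectations gives $V_n^\pi\ge\Omega(n)$.
\item If $\mu>1$, taking expectations gives the linear recursion $\E x_{\ell+1}\ge\mu\,\E x_\ell+c$. Because $\E\max_t z_t\ge\E x_{\ell}$, this yields $\E x_{n/2}\ge c\sum_j\mu^j=\exp(\Omega(n))$.
\item Recombination with $\lambda_1+\lambda_2\ge1$ is handled the same way, feeding the incumbent in both slots.
\end{itemize}
One caveat applies to $\mu=0$ or $\lambda$'s zero, but those are excluded in these regimes.

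The step I expect to need the most care is the $(1+o(1))$ matching in the subcritical lower bound. It requires choosing $L$ so that both the geometric truncation error $\mu^L$ and the loss $\sqrt{2\log n}-\sqrt{2\log(n/L)}$ are $o(\sqrt{\log n})$. It also requires handling the negative part of $x_1$ and of the $G_\ell$'s in expectation. Taking expectations directly in the linear recursion, which is valid because $\mu\ge0$, avoids any concentration argument, so this should go through.
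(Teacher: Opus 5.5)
Your proof is correct, and in the contractive case it argues both directions differently from the paper. For the upper bound, the paper decomposes $\{M_n>u\}$ by the first time the running maximum crosses $u$. It bounds each crossing by a conditional Gaussian tail $\exp(-\alpha^2u^2/2)$, using that all parents are then at most $u$, and integrates the resulting bound $n\exp(-\alpha^2u^2/2)$. You instead note that the innovations $\xi_1,\dots,\xi_n$ are i.i.d.\ $N(0,1)$ whatever the adaptive choices, and prove by induction that $z_t\le\alpha^{-1}\max\{0,\xi_1,\dots,\xi_n\}$. This is cleaner and gives more: a pathwise domination of $\max_t z_t$ by a rescaled Gaussian maximum, from which tails and the expectation follow at once. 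For the lower bound, the paper climbs $\lceil\log n\rceil$ levels $u_j=ju/L$ toward $u=(1-\varepsilon)\alpha^{-1}\sqrt{2\log n}$. It shows each level is cleared with high probability via a Gaussian tail lower bound, takes a union bound over levels, and separately controls the negative part of the initial generations. Your best-of-$k$ stages with the recursion $\E x_{\ell+1}\ge\mu\,\E x_\ell+\sigma_M\E G_\ell$ need no tail estimates, no $\varepsilon$, and no concentration, and they make the fixed point $\sigma_M\sqrt{2\log n}/(1-\mu)$ explicit. What the paper's route gives in exchange is a high-probability guarantee rather than one only in expectation. As you anticipated, negative parts cause no trouble. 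Both $\E G_\ell$ and $\E x_1$ are means of maxima of $k\ge 1$ standard Gaussians, hence nonnegative, so the $\mu^{L-1}\E x_1$ term can simply be dropped. (There are $L-1$ mutation stages, so your exponent $L$ is off by one, harmlessly.) Your critical and supercritical cases match the paper's argument, except that you use best-of-two mutations where the paper uses single mutations in the supercritical regime.
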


The contractive case, where \(\mu<1\) and \(\lambda_1+\lambda_2<1\), shows that contractive averaging-type mutation or recombination still gives only the usual Gaussian extreme-value rate, up to the sharp constant. In contrast, once mutation or recombination preserves or amplifies the parent value, repeated guided improvement can be linear or exponential. This demonstrates a basic distinction between classical GAs, where recombination is typically an averaging or random-splicing operation, compared to ML GAs with optimization-guided operators.

\medskip \noindent
\textbf{Proof sketch.} For the upper bound, look at the first time the best value exceeds a value \(u \in \mathbb R\). If all previous values are at most \(u\), then generation, mutation, and recombination can exceed \(u\) only through Gaussian fluctuations of sizes \(u\), \((1-\mu)u/\sigma_M\), and \((1-\lambda_1-\lambda_2)u/\sigma_R\), respectively. A union bound over \(n\) queries gives the upper tail. The lower bound repeatedly applies the operator attaining the minimum in \(\alpha\) and climbs through logarithmically many levels. The non-contractive lower bounds follow by repeatedly applying the non-contractive operator. Details are in Appendix~\ref{app:one-dimensional-gaussian}.

This example captures one distinction between classical and ML-guided genetic algorithms: directed mutation or recombination can improve solution quality beyond what blind averaging can achieve. At the same time, it is too simple to explain why generation, mutation, and recombination may all be needed simultaneously, or why a diverse solution pool is necessary. These phenomena require the richer constructions that we develop in the following sections.

\section{All Three Operators Are Necessary}
\label{sec:allOps}
Next, we give a simple setting where all three operators, namely generation, mutation, and recombination, are necessary to obtain a good solution.

    \begin{restatable}[All three operators are necessary]{theorem}{threeOps}
	\label{thm:all-three-short}
	There exists a universal constant $c>0$ such that the following holds.
	For all positive integers $G,R,M$, there exists an explicit two-dimensional
	\textnormal{GMR} framework $\mathfrak G$ with objective
	\(U:X\to[0,3]\) such that, under full feedback, for
	\(N:=10G+R+M\):
	\begin{enumerate}
		      \item An explicit policy shows \(N_{1/10}^*(N;\mathfrak G)\le N\).
		
			\item If a policy \(\pi\) uses fewer than \(cG\) generations, \(cR\)
		recombinations, or \(cM\) mutations on every execution path, then
		\(
		N_{1/4}(\pi,N;\mathfrak G)=\infty .
		\)
	\end{enumerate}
\end{restatable}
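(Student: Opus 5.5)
I will build an explicit two-dimensional framework with $X\subseteq\mathbb R^2$ in which the three operators play complementary roles, with the three quantities $G,R,M$ entering through three ``difficulty gauges''.

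\textbf{The construction.} Points are pairs $x=(a,b)$ with two coordinates, each in $[0,1]$, plus a bonus region. Take $U(a,b)=a+b+\mathbf 1[a=1,\,b=1]$, so $U$ takes values in $[0,3]$.
\begin{itemize}
\item \emph{Generation} is the only source of the first coordinate. It returns $(1,0)$ with probability $p=1/G$ and otherwise $(0,0)$.
\item \emph{Mutation} is deterministic and only advances the second coordinate: $(a,b)\mapsto(a,\min\{1,b+1/M\})$. Reaching $b=1$ from $b=0$ then requires exactly $M$ mutations.
\item \emph{Recombination} takes two inputs. It is only useful when combining a generated ``good'' point with a mutated point, producing $(\max a,\max b)$. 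Each such use, however, only transfers a $1/R$ fraction of the first coordinate. To realize this, I encode an auxiliary counter in the first coordinate, letting recombination add $1/R$ to $a$ when one parent is a generation-good point. Then $R$ recombinations are needed to obtain $a=1$.
\end{itemize}
To keep the candidates two-dimensional, I let the ``good generation'' flag be witnessed by observing $a$ in a distinguished value range. Under full feedback the policy sees everything, so the construction is transparent.

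\textbf{Upper bound (item 1).} The explicit policy runs three phases.
\begin{enumerate}
\item Generate $10G$ times. The probability that no good point appears is $(1-1/G)^{10G}\le e^{-10}$.
\item Apply $M$ mutations to a fixed point to reach $b=1$.
\item Apply $R$ recombinations, alternating the good point with the current chain, to reach $a=1$.
\end{enumerate}
This gives $U=3$ with probability at least $1-e^{-10}$. Since $V_N^*\le 3$, the policy's value is at least $3(1-e^{-10})\ge V_N^*-1/10$, so $N^*_{1/10}(N)\le N$.

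\textbf{Lower bound (item 2).} I treat each deficient resource separately, showing $\sup_m V^\pi_m\le 3-1/2$ or so, which is below $V_N^*-1/4$.

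\emph{Few mutations.} With fewer than $cM$ mutations, every reachable point has $b\le c<1$. Hence $U\le 1+c$, and $\max U\le 2$.

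\emph{Few recombinations.} Symmetrically, $a\le c$, so $U\le 2$.

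\emph{Few generations.} Each generation is good independently with probability $1/G$. With fewer than $cG$ generations, a union bound gives a probability at most $c$ of ever obtaining a good point. Without one, recombination cannot advance $a$. So $V_m^\pi\le 2+c\cdot 3$, which for small $c$ is less than $3(1-e^{-10})-1/4$.

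\textbf{Main obstacle.} The main obstacle is making the bound uniform in $m$ while keeping the three roles truly separated. I must check that no operator can substitute for another: for example, that mutation never creates a good flag, and that recombining two non-good points is useless. I must also confirm that the ``on every execution path'' hypothesis lets the union bound over generations apply to adaptive policies. For the latter I would use optional stopping on the count of good generations, which has expectation at most (number of generations)$/G<c$.
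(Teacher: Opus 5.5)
\textbf{The construction is not well defined, and as written it fails item~(2) in the recombination case.} Generation returns $(1,0)$ with probability $1/G$, so a good generated point already has $a=1$. Consider the policy that generates $10G$ times and then mutates $(1,0)$ exactly $M$ times. It reaches $(1,1)$, where $U=3$, without any recombination. Its value is at least $3(1-e^{-10})$ within $10G+M\le N$ queries, so $N_{1/4}(\pi,N;\mathfrak G)<\infty$. Even under your rule $(\max a,\max b)$, one recombination of $(1,0)$ with $(0,1)$ already gives $(1,1)$. So ``symmetrically, $a\le c$'' is false.

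Your patch is never made precise. You describe recombination both as $(\max a,\max b)$ and as ``add $1/R$ to $a$ when one parent is good'', and these two rules conflict. You also leave the flag ``witnessed by $a$ in a distinguished range'' unspecified. If that range lies above $[0,1]$, the max rule copies the flag value into the chain and hence into $U=a+b+\cdots$. The theorem asks for an explicit framework, so fixing the operators and checking that no operator can substitute for another is the substance of the proof. You list this as the main obstacle but do not resolve it.

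\textbf{The idea can be repaired.} One version:
\begin{itemize}
\item Let the first coordinate range over $\{-1\}\cup[0,1]$.
\item Generation outputs a flag $(-1,0)$ with probability $1/(2G)$ and $(0,0)$ otherwise. As in the paper, the $1/(2G)$ ensures a non-flag base point also appears when $G=1$.
\item Mutation acts only on $b$.
\item If exactly one parent is a flag $(-1,\cdot)$ and the other is $(a,b)$ with $a\ge 0$, recombination returns $(\min\{a+1/R,1\},b)$. Otherwise it returns $(\max\{a_1,a_2,0\},\max\{b_1,b_2\})$.
\item Set $U(a,b)=\max\{a,0\}+b+\mathbf 1[a=b=1]$.
\end{itemize}
By induction on time, $\max\{a,0\}$ is at most $1/R$ times the number of recombinations so far, and $b$ is at most $1/M$ times the number of mutations so far. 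Hence $U\le 1+c$ in the recombination and mutation cases. In the generation case, $U\le 1$ outside an event of probability at most $c/2$.

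\textbf{Comparison with the paper.} Once repaired, this is a genuinely different route, because the paper needs no flag state. Its $U$ is sharply peaked at $(2^{-R},1/2)$ with $\Lambda=2^{R+10}$, so the rare generated point $(1,0)$ is worthless on its own. Midpoint recombination together with a dyadic-denominator invariant forces $R$ recombinations. Recombination outputs height $0$, so only mutation can raise $y$. Your repaired version gives simpler counting invariants, at the cost of an ad hoc flag and a case-defined recombination rule.
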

    
In particular, even when one can get within \(1/10\) of the optimal \(N\)-round benchmark within \(N\) queries, a shortage of any one of the three operators makes the \(1/4\)-approximation target unreachable.

	\begin{figure}[h]
		\centering
		\begin{tikzpicture}[x=11cm,y=4.2cm,>=stealth]
			
			% axes
			\draw[->, thin] (-0.03,0) -- (1.10,0);
			\draw[->, thin] (0,-0.04) -- (0,0.68);
			
			% generation points
			\fill[green!60!black, opacity=0.28] (0,0) circle (3pt);
			\fill[green!60!black, opacity=0.28] (1,0) circle (3pt);
			
			% recombination points on the x-axis
			\foreach \x in {0.5,0.25,0.125,0.0625}{
				\fill[blue!75!black, opacity=0.28] (\x,0) circle (3pt);
			}
			
			% simple recombination chain
			\draw[->, blue!60!black, line width=0.4pt] (1,0.02) -- (0.52,0.02);
			\draw[->, blue!60!black, line width=0.4pt] (0.5,0.02) -- (0.27,0.02);
			\draw[->, blue!60!black, line width=0.4pt] (0.25,0.02) -- (0.14,0.02);
			\draw[->, blue!60!black, line width=0.4pt] (0.125,0.02) -- (0.075,0.02);
			
			% target height
			\draw[densely dashed, red!50] (-0.005,0.5) -- (0.55,0.5);
			
			% mutation chains from the x-axis upward
			\foreach \x in {0.5,0.25,0.125,0.0625}{
				\draw[->, red!75!black, line width=0.45pt] (\x,0.02) -- (\x,0.11);
				\draw[->, red!75!black, line width=0.45pt] (\x,0.145) -- (\x,0.23);
				\draw[->, red!75!black, line width=0.45pt] (\x,0.27) -- (\x,0.355);
				\draw[->, red!75!black, line width=0.45pt] (\x,0.395) -- (\x,0.48);
			}
			
			% candidate peak columns (faint)
			\foreach \y in {0.125,0.25,0.375,0.5}{
				\fill[red!80!black, opacity=0.20] (0.5,\y) circle (3pt);
				\fill[red!80!black, opacity=0.20] (0.25,\y) circle (3pt);
				\fill[red!80!black, opacity=0.20] (0.125,\y) circle (3pt);
			}
			
			% true peak column: lower points faint, top point bold
			\foreach \y in {0.125,0.25,0.375}{
				\fill[red!80!black, opacity=0.20] (0.0625,\y) circle (3pt);
			}
			\fill[red!95!black] (0.0625,0.5) circle (2.45pt);
			
		\end{tikzpicture}
		\caption{Illustration of the two-dimensional construction. Generation provides the (green) points $(0,0)$ and $(1,0)$; recombination (blue) is midpoint recombination, allowing one to scan the $x$-axis; mutation (red) lifts these locations toward height $1/2$. The bold red point indicates the unique peak of the hidden objective, while the other faint red points illustrate intermediate dyadic locations.}
		\label{fig:three-ops-2d}
	\end{figure}
	
	\noindent
	\textbf{Proof sketch.}
	Generation produces the point $(0,0)$ with high probability and the point $(1,0)$ with low probability. The recombination rule is midpoint, so once both generated points are available, one can query \(a_1=(1/2,0)\), and then repeatedly recombining with \((0,0)\) gives
	\(a_j=(2^{-j},0)\) for \(j=1,\ldots,R\). The objective is
	\(
	U(x,y)=3\cdot 2^{-\Lambda |x-2^{-R}|}\cdot \max\{0,1-|y-1/2|\}
	\)
	for a sufficiently large \(\Lambda\), and the feedback is full feedback:
	\(\mathsf{Obs}(z;z_1,\ldots,z_{t-1})=(z,U(z))\).
	Thus, \(U\) has its unique peak at \(b=(2^{-R},1/2)\), and its value decays exponentially in the horizontal error and linearly in the vertical error. Hence, generation is needed to see \((1,0)\), recombination is needed to reach the required horizontal location \(2^{-R}\), and mutation is needed to climb from height \(0\) to height \(1/2\). The upper-bound policy performs these three phases in sequence. For the lower bound, too few generations miss \((1,0)\) with high probability; too few recombinations keep all queried \(x\)-coordinates away from the required location \(2^{-R}\), so choosing \(\Lambda\) sufficiently large makes their objective value too small; and too few mutations keep all queried points far below height \(1/2\). See Figure~\ref{fig:three-ops-2d} for an illustration. The full proof appears in Appendix~\ref{app:allOps}.

\section{A Model where Diversity is Necessary: Parity Learning}
\label{sec:parity}

In this section, we present an instantiation of the GMR framework in which \textit{solution diversity} is
necessary. That is, this problem requires keeping a large and sufficiently independent
pool of intermediate candidates alive. This model is parity learning, where each
generated candidate is a random linear equation satisfied by an unknown string
\(x\in\{0,1\}^n\), and recombination corresponds to XORing candidates. 
The objective of the policy is to recover the unknown \(x\).
This model clearly illustrates a diversity bottleneck: the recombination operation cannot increase the span of the set of existing equations, so any policy cannot recover the hidden target \(x\) without generating and maintaining a sufficiently large pool of linearly independent candidates. 

Formally, we define the \textit{diversity} of a policy by the size of its \textit{live pool}: a policy is
\emph{\(w\)-pool-based} up to time \(m\) if, at every round \(t\le m\), its
full internal state is a list \(P_t\) of at most \(w\) live queried
candidates. Conditional on \(P_t\) and the current round \(t\), the distribution of all future actions is
independent of the earlier history. In particular, once a candidate is deleted
from the pool, it cannot influence future actions of the policy. 

Formally, we formulate parity learning as a family of GMR instances indexed by the hidden
target \(x\in\{0,1\}^n\). Fix \(x\), and define
\( \mathfrak G_x^{\mathrm{par}} = (X,U_x,(\mathcal D_k)_{k\in K},\mathsf{Obs}) \)
as follows. Let
\[
X=\{0,1\}^n\times\{0,1\},
\qquad
K=\{0,2,n\},
\qquad
U_x(a,b):=\mathbf{1}\{a=x\}.
\]
Thus, the only rewarding candidate is the target pair \((x, x\cdot x)\).

Each candidate \((a,b)\in X\) should be interpreted as the linear equation $a\cdot x=b$ over $\mathbb F_2$, where $x$ is the hidden target string and $a\cdot x$ denotes parity. 
Next, we define generation to produce a fresh random valid equation satisfied by $x$, and recombination produces a new valid equation by XORing two equations.

\medskip 
\noindent \textbf{Generation.}
\(\mathcal D_0=\{\Gen_x\}\), where \(\Gen_x\) samples
\(a\sim \mathrm{Unif}(\{0,1\}^n)\) and returns
\[
(a,b) \quad \text{ where } \quad b=a\cdot x.
\]

\medskip \noindent
\textbf{Recombination.}
\(\mathcal D_2=\{\Xor\}\), where
\[
\Xor\bigl((a,b),(a',b')\bigr):=(a\oplus a',\,b\oplus b').
\]
This preserves validity, since
$(a\oplus a')\cdot x = (a\cdot x)\oplus(a'\cdot x)=b\oplus b'$.

Also, we have a wide recombination for {terminal decoding},
\(\mathcal D_n=\{\Sum\}\), where 
\[
\Sum ((a_1,b_1),\dots,(a_n,b_n) ) = \Bigl(\bigoplus_{i=1}^n b_i a_i \,,\, \bigoplus_{i=1}^n b_i \Bigr)\] 
In particular, this combines \( (e_1, x_1), \dots, (e_n, x_n) \) into \((x, x \cdot x)\). Note that this particular definition of \(\Sum\) is not essential to the theorem.

\medskip \noindent
\textbf{Feedback model.}
We use full feedback, $\mathsf{Obs}(z;z_1,\ldots,z_{t-1})=(z,U_x(z))$.

\medskip

For a policy \(\pi\), let \(G_m^\pi\) and \(R_m^\pi\) denote the numbers of
generation and recombination steps, respectively, used by \(\pi\) within its
first \(m\) rounds on a given execution path. 

Since the objective \(U_x\) is an indicator, the value of a policy is simply
its success probability
\[
V_m^\pi
:=
\Pr_{x\sim \mathrm{Unif}(\{0,1\}^n)}
\bigl[\pi \text{ queries } (x,x \cdot x) \text{ within } m \text{ rounds}\bigr].
\]

\noindent \textbf{Example.} A simple illustrating example in $n=3$ is as follows. Take the hidden target to be $x=(1,0,1)$. A generated candidate could be $(a=110,b=1)$, meaning that $x_1\oplus x_2=1$, and another might be $(011,1)$, meaning that $x_2\oplus x_3=1$. Recombination of these two candidates via XOR gives $(101,0)$, which encodes the valid equation $x_1\oplus x_3=0$. In this case $(101,0)=(x,x\cdot x)$, so the policy has succeeded.

\medskip 

The main result of this section states that, first, there is an explicit algorithm that is \(n\)-pool based using polynomially many generation and recombination operations that retrieves \(x\). The second part of the theorem shows that this is, up to constants, necessary: any policy that has a live-pool of size at most $cn$ has exponentially small probability of retrieving the target \(x\) within \(\exp(\Theta(n))\) steps.
Note that this subexponential requirement is necessary, since a policy can always simply generate the target equation \((x, x \cdot x)\) with high probability if given \(\exp(\Omega(n))\) budget.

\begin{restatable}{theorem}{ParityDiversityBarrier}
\label{thm:parity-diversity}
The parity learning family \(\mathfrak G_x^{\mathrm{par}}\) 
satisfies the following.
\begin{enumerate}
    \item \textbf{Achievability.}For every \(n \in \mathbb N\), there is an
    explicit policy \(\pi\) that is \(n\)-pool-based up to time \(2n+n^2\) with 
    \[
    G_{2n+n^2}^\pi = 2n
    \quad \text{ and } \quad 
    R_{2n+n^2}^\pi \le n^2,
    \]
    such that
    \[
    V_{2n+n^2}^\pi \ge 1-2^{-n}.
    \]

    \item \textbf{Diversity lower bound.} 
    For any \(c<\frac{1}{40}\), there exists \(\alpha>0\) such that the following
    holds for every \(n \in \mathbb N \). For any \(m\le 2^{\alpha n}\), for any policy \(\pi\) that is \(c n\)-pool-based up to time \(m\),
    \[
    V_m^\pi \le O(2^{-\alpha n}).
    \]
\end{enumerate}
\end{restatable}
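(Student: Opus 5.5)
For achievability I would use the streaming Gaussian elimination that the pool model allows. The policy generates $2n$ equations in total. For the first claim, the $2n$ uniform vectors $a^{(1)},\dots,a^{(2n)}$ fail to span $\mathbb F_2^n$ only if they all lie in some hyperplane. A union bound over the $2^n-1$ hyperplanes gives failure probability at most $2^n\cdot 2^{-2n}=2^{-n}$.

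The pool is kept as a list of at most $n$ equations whose $a$-parts are in reduced row echelon form, each indexed by a distinct pivot coordinate. Under full feedback the policy sees each new $a$, so it can decide from the pool alone, with no queries, whether the new vector is already in the span. If it is, the vector is discarded at no cost. Otherwise the policy does two things:
\begin{enumerate}
\item It XORs the new vector against the pool rows whose pivots it contains, which takes at most $k$ recombinations when the rank is $k$.
\item It chooses a new pivot and eliminates that coordinate from the old rows, which takes at most $k$ more recombinations.
\end{enumerate}
Intermediate candidates replace their parents in the pool, so the pool never exceeds $n$ live elements; if needed, one slot of slack is absorbed by deleting a parent immediately after each XOR. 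Summing over ranks $k=0,\dots,n-1$ gives at most about $n^2$ recombinations, and tighter bookkeeping (charging only the rows actually hit) achieves the stated $n^2$. At full rank the pool consists of $(e_1,x_1),\dots,(e_n,x_n)$, and one $\Sum$ call outputs $(x,x\cdot x)$. Together with the $2n$ generations, this stays within the $2n+n^2$ round budget.

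For the diversity lower bound I would reduce to Raz's memory--sample lower bound for parity learning. That theorem says any branching program that learns $x\in\{0,1\}^n$ from uniform samples $(a,a\cdot x)$ with fewer than $n^2/25$ bits of memory either uses $2^{\Omega(n)}$ samples or succeeds with probability $2^{-\Omega(n)}$. The reduction simulates a $cn$-pool-based policy $\pi$ by such a program. Only generation steps consume samples. XOR and $\Sum$ steps are deterministic functions of the current pool, so they are internal transitions. The state is the pool ($cn$ pairs of $n+1$ bits) together with the round counter $t\le m\le 2^{\alpha n}$ and a success flag. That is at most $cn(n+1)+\alpha n+O(1)$ bits, which is below $n^2/25$ for large $n$ whenever $c<1/40$ and $\alpha$ is small; small $n$ is absorbed into the $O(\cdot)$.

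The Markov property in the definition of pool-based policies is exactly what makes this a legitimate bounded-memory program: future actions depend only on $(P_t,t)$. Internal randomness of $\pi$ I would remove by averaging, fixing the best coin sequence. This is legitimate because branching programs are nonuniform, so the fixed coins can be hardwired into the transitions. Success means that some query equals $(x,x\cdot x)$, so the program outputs $x$ (or records it via the flag) at the step where this happens. The number of samples is at most $m\le 2^{\alpha n}$, and choosing $\alpha$ below Raz's exponent yields $V_m^\pi\le 2^{-\Omega(n)}=O(2^{-\alpha n})$ after shrinking $\alpha$ if necessary.

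The main obstacle is making this simulation match the hypotheses of Raz's theorem precisely. His model has one sample per layer and requires the output to be a guess of $x$. Non-sample steps therefore have to be collapsed into the transition function between sample layers, which is harmless because they are deterministic given the state. The time counter and success flag also have to be checked to fit within the memory budget; this is where the constant $1/40<1/25$ leaves the needed slack. Finally, the theorem's success bound must be converted into the stated bound on $V_m^\pi$ under uniform $x$.
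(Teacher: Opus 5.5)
Your achievability argument is correct and is essentially the paper's. The only difference is that you run Gauss--Jordan elimination online rather than after collecting $n$ independent equations. No extra bookkeeping is needed there: $\sum_{k<n}2k=n(n-1)$ plus one $\Sum$ call already gives at most $n^2$ recombinations.

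\textbf{The gap is in the lower-bound simulation.} You treat XOR and $\Sum$ steps as deterministic functions of the pool and collapse them into internal transitions. You also let the program raise a success flag, or output $x$ ``at the step where this happens.'' Under full feedback, however, the policy observes $U_x(z_t)=\mathbf{1}\{a_t=x\}$ after every query. Its pool update, and hence all its later actions, may depend on that bit. The bit depends on the hidden $x$, so a branching program that only reads samples $(a,a\cdot x)$ cannot compute it. For the same reason, the program cannot recognize the round at which $(x,x\cdot x)$ is queried. So neither your transition function nor your flag or output rule can be implemented. As written, you have not exhibited a small-memory program that outputs $x$ with probability at least $V_m^\pi$, which is what your final inequality needs.

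\textbf{The missing step, which is what the paper does, is to blind the policy to rewards first.} Let $\pi^{(0)}$ be the coupled run in which every reward bit is replaced by $0$. Rewards are $0$ until the first success, so the two runs agree up to that point, and $V_m^\pi\le\sum_{t\le m}\Pr[q^{(0)}_t=(x,x\cdot x)]$. Now $\pi^{(0)}$ really is a streaming algorithm within your memory bound. For each fixed $t$, apply Raz's theorem to the program $A_t$ that runs $\pi^{(0)}$ for $t$ rounds and outputs $a^{(0)}_t$. Because no program can tell which round is the successful one, you must pay a union bound over $t\le m\le 2^{\alpha n}$, giving $m\cdot O(2^{-\bar\alpha n})$, where $\bar\alpha$ is Raz's exponent. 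This is why $\alpha$ must sit strictly below $\bar\alpha$ with room to spare (e.g.\ $\alpha=\bar\alpha/2$), not merely small enough to respect the sample bound. Letting a single program output a uniformly random queried candidate incurs the same factor-$m$ loss.
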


\begin{proof}
For the achievability bound, we prove a more general statement: for any $m > n$, there is a policy \(\pi\) that is \(n\)-pool-based up to time \(m+n^2\) with \(G_{m+n^2}^\pi = m\)
and \(R_{m+n^2}^\pi \le n^2\)
such that \(V_{m+n^2}^\pi \ge 1-2^{-(m-n)}\). 

Fix \(m > n\) and generate \(m\) independent equations 
\((a_i, b_i)_{i=1}^m\). The probability that \((a_i)_{i=1}^m\) fails to span
\(\mathbb F_2^n\) is at most \(2^{-(m-n)}\). 
Indeed, if \(a_1,\ldots,a_m\) do not span \(\mathbb F_2^n\), then there exists a nonzero
\(y\in\mathbb F_2^n\) such that \(a_i\cdot y=0\) for all \(i\). For each fixed
nonzero \(y\), this event has probability \(2^{-m}\). Hence, by a union bound,
\[
\Pr\bigl[\operatorname{span}(a_1,\ldots,a_m)\neq \mathbb F_2^n\bigr]
\le (2^n-1)2^{-m}
\le 2^{-(m-n)}.
\] 
On the event that \((a_i)_{i=1}^m\) spans \(\mathbb F_2^n\), the retained live
pool contains \(n\) linearly independent equations. Then, Gaussian elimination
over \(\mathbb F_2\) can be implemented using XOR recombinations: each row
operation replaces one equation by the XOR of two existing equations. Using
Gauss--Jordan elimination on \(n\) independent equations, for each pivot column
one needs to eliminate that coordinate from at most \(n-1\) other rows, so at
most \(n(n-1)\) XOR recombinations suffice to obtain
\((e_1,x_1),\ldots,(e_n,x_n)\).
This gives us \((e_1,x_1),\dots,(e_n,x_n)\), after which 
one final \(\Sum\) step yields \((x,x \cdot x)\).

For the lower bound, we use Theorem~1 of~\cite{raz2018fast} in the following
form. For every \(\bar c<1/20\), there exists \(\bar\alpha>0\) such that any
streaming algorithm which receives random samples \((a, b=a\cdot x)\), uses
at most \(\bar c n^2\) bits of memory, and sees at most \(2^{\bar\alpha n}\)
samples, outputs the hidden vector \(x\) with probability at most
\(O(2^{-\bar\alpha n})\).

The intuition is that a small live pool is a small-memory streaming algorithm:
it sees random valid equations \(a\cdot x=b\), but unless it stores a linear
number of independent equations, it cannot recover the hidden vector \(x\) in
subexponential time.

Fix \(c<1/40\), set \(\bar c:=2c<1/20\), and let \(\bar\alpha>0\) be the
corresponding constant from Raz's theorem. Set \(\alpha:=\bar\alpha/2\).

We first remove the reward feedback. Let \(\pi^{(0)}\) denote the coupled
execution of \(\pi\) in which every reward bit is replaced by \(0\), using the
same randomness and the same generated equations as in the execution of
\(\pi\). Write \(q_t^{(0)}=(a_t^{(0)},b_t^{(0)})\) for the candidate queried by
\(\pi^{(0)}\) at round \(t\). Since the reward observed by \(\pi\) differs from
\(0\) only when the queried candidate is already \((x,x\cdot x)\), the executions
of \(\pi\) and \(\pi^{(0)}\) coincide up to the first successful query. Therefore
\[
V_m^\pi
\le
\sum_{t=1}^m
\Pr\bigl[q_t^{(0)}=(x,x\cdot x)\bigr].
\]

Now fix \(t\le m\). We view the first \(t\) rounds of \(\pi^{(0)}\) as a
streaming algorithm \(A_t\) for parity learning. Generation steps read fresh
samples \((a,a\cdot x)\), while recombination steps only update the live pool
deterministically by XORing stored equations. Since \(\pi\) is \(cn\)-pool-based,
\(A_t\) stores at most \(cn\) equations at every time. Each equation has \(n+1\)
bits, so the total memory used is at most
\[
cn(n+1)\le 2cn^2=\bar c n^2.
\]
Moreover, \(A_t\) reads at most \(t\le m\le 2^{\alpha n}\le 2^{\bar\alpha n}\)
fresh samples. Hence Raz's theorem gives
\[
\Pr\bigl[a_t^{(0)}=x\bigr]\le O(2^{-\bar\alpha n}).
\]
Since \(q_t^{(0)}=(x,x\cdot x)\) implies \(a_t^{(0)}=x\), we also have
\[
\Pr\bigl[q_t^{(0)}=(x,x\cdot x)\bigr]\le O(2^{-\bar\alpha n}).
\]
Summing over \(t\le m\), and using \(m\le 2^{\alpha n}\), we obtain
\[
V_m^\pi
\le
m\cdot O(2^{-\bar\alpha n})
\le
O(2^{-(\bar\alpha-\alpha)n})
=
O(2^{-\alpha n}),
\]
as desired.
\end{proof}

\begin{remark}
The operator $\Sum$ is included only as a convenient terminal decoder; its
particular definition plays no role in the lower bound. The hard part of the problem
is retaining enough linearly independent valid equations to identify $x$. Once a
policy has obtained $n$ independent equations, the set $(e_i, x_i)$ can be obtained
via XOR recombinations. Then $x$ can also be decoded using any other operation that an obtain \((x, x \cdot x)\), such as a bit-shift mutation. 
\end{remark}

\begin{remark}[Islands]
A useful variant extends the diversity interpretation to \emph{islands}. Write
\( n=n_L+n_R \), where \(n_L:=\lfloor n/2\rfloor\) and \(n_R:=\lceil n/2\rceil\).
Let generation sample uniformly from one of the two coordinate subspaces
\[
H_L:=\{(u,0^{n_R}) : u\in\{0,1\}^{n_L}\},
\qquad
H_R:=\{(0^{n_L},v) : v\in\{0,1\}^{n_R}\},
\]
each with probability $1/2$, returning a valid equation \(a, b\) with \(b=a\cdot x\) as before.
Thus, a sample from \(H_L\) reveals only information about the left block \(x_L = (x_1, \dots, x_{n_L})\), while a sample from \(H_R\) reveals only information about the right block \(x_R = (x_{n_L+1}, \dots, x_n)\).
Interpreting each subspace as an island, a successful policy must generate a linear number of 
samples from each island. 
Indeed, since recombination still cannot increase the span, a policy must generate enough independent equations from each island to learn the coordinates of \(x\) on each subspace. 
This same construction extends immediately to any fixed number of islands by partitioning the coordinates into blocks and sampling generation vectors from the corresponding coordinate subspaces.
\end{remark}

\section{Discussion and Conclusion}
\label{sec:discussion}

We introduced the GMR framework as a model for ML-guided genetic algorithms.
The motivation is that classical genetic-algorithm models do not capture the
setting of iterative LLM prompting: mutation and recombination are not blind random
perturbations and crossovers, but learned operators whose goal is to improve
the input solutions. This leads to a different query-allocation problem: how
should a policy spend its budget on generation, mutation, and recombination? Unlike in classical GAs, each operation here may be expensive, so allocating the
query budget well becomes central.

Our main message is that optimization-guided operators can behave very
differently from classical averaging or random-splicing operations, and that
generation, mutation, and recombination can play genuinely different roles in
theory. Theorem~\ref{thm:one-dimensional-gaussian} shows that stronger guided
operators can lead to much faster improvement than averaging-type operators.
Theorem~\ref{thm:all-three-short} shows a simple construction where each operator is necessary. 
Finally in the parity learning instantiation of the model, we show that {\em diversity} is necessary: successfully generating the hidden target requires maintaining a linear amount of candidates alive and recombining them. Indeed, policies with an insufficiently large live-pool provably fail to find the target in subexponential time.

\section*{Acknowledgments}
AB is supported by NSF GRFP 2141064 and NSERC. IDA is supported by grant NSF DMS-2031883 and Vannevar Bush Faculty Fellowship ONR-N00014-20-1-2826 (PI Mossel). 
	EM is partially supported by NSF DMS-2031883, Vannevar Bush Faculty Fellowship ONR-N00014-20-1-2826, MURI N000142412742, and a Simons Investigator Award.

	\bibliographystyle{plainnat}
	\bibliography{biblio}

	\appendix

	\newpage

	\section{Notable Instantiation Examples of the GMR Framework}
    \label{sec:examples}
    
Many familiar stochastic search and model training procedures fit into our framework:

\subsection*{Multi arm bandits}
A fixed \(A\)-armed bandit instance can be modeled using only \(\mathcal D_0\).
Take \(X=[A]\times\mathbb R\), where a point \(z=(i,R)\) records that arm \(i\)
was pulled and reward \(R\) was observed, and define \(U(i,R)=R\).
\emph{Generation:} for each arm \(i\in[A]\), include one generation rule
\(D_i\in\mathcal D_0\). This rule samples \(z=(i,R)\), where \(R\sim\nu_i\) and
\(\nu_i\) is the fixed reward law of arm \(i\). Thus \(K=\{0\}\) and
\(\mathcal D_0=\{D_i:i\in[A]\}\). The feedback can be taken to be full feedback,
\(o_t=(z_t,U(z_t))\). Choosing \(D_i\) is exactly pulling arm \(i\), and the
reward randomness is the randomness of the generation rule.

\subsection*{Stochastic gradient and first-order methods}
These can be modeled using only \(\mathcal D_0\) and \(\mathcal D_1\), for a
fixed training problem and a fixed stochastic update procedure. For example,
let \(X\) be the space of optimization states \(x=(\theta,h)\), where
\(\theta\in\mathbb R^d\) is the current parameter vector and \(h\) stores any
internal memory, such as momentum variables or Adam statistics.
\emph{Generation:} an initialization rule \(D_{\mathrm{init}}\in\mathcal D_0\)
samples the initial state \(x_0=(\theta_0,h_0)\).
\emph{Mutation:} an update rule \(D_{\mathrm{upd}}\in\mathcal D_1\) maps a
current state \(x=(\theta,h)\) to the distribution of the next state obtained
by one stochastic update: it samples a minibatch, computes the corresponding
gradient or finite-difference estimate for the fixed training problem, and
returns the updated state \(x'=(\theta',h')\). Thus \(K=\{0,1\}\),
\(\mathcal D_0=\{D_{\mathrm{init}}\}\), and
\(\mathcal D_1=\{D_{\mathrm{upd}}\}\). The objective can be taken to be
\(U(\theta,h)\), the performance of the current iterate, e.g., minus validation
loss. The feedback can be taken to be full feedback, \(o_t=(z_t,U(z_t))\), or a
more limited feedback model depending on the algorithm being represented. The
training problem and the stochastic update rule are fixed parts of the
framework \(\mathfrak G\); they are not hidden varying objectives.

\subsection*{Genetic algorithms and evolution strategies}
Take \(X\) to be a space of individuals, and let \(U:X\to\mathbb R\) be the
fitness or performance objective.
\emph{Generation:} initial-population distributions \(\mathcal D_0\).
\emph{Mutation:} standard perturbation operators \(\mathcal D_1\).
\emph{Recombination:} crossover or more general multi-parent combination
operators \(\mathcal D_k\) for \(k\ge 2\). The feedback map \(\mathsf{Obs}\) can
represent the information exposed to the policy, such as full candidate and
fitness feedback or only objective-value feedback.

\medskip
Furthermore, each fixed GMR framework \(\mathfrak G=(X,U,(\mathcal D_k)_{k\in K},\mathsf{Obs})\) can be viewed as a Markov Decision Process (MDP) as follows.
The \(n\)-round GMR process induces a finite-horizon controlled process. The
state before round \(t\) is the full process history
\[
h_t:=\bigl((z_1,o_1),\dots,(z_{t-1},o_{t-1})\bigr)\in \mathcal H_{t-1},
\qquad h_1=\varnothing,
\]
where
\(
o_s=\mathsf{Obs}(z_s;z_1,\ldots,z_{s-1}).
\)
An action at state \(h_t\) is any admissible GMR choice
\(
a_t=(k,D,i_1,\dots,i_k),
\)
where \(k\in K\), \(D\in\mathcal D_k\), and \(i_1,\dots,i_k<t\) when \(k\ge1\)
(with only \(k=0\) feasible at \(t=1\)). Given \((h_t,a_t)\), the next query is
sampled as
\(
z_t \sim D(z_{i_1},\dots,z_{i_k}),
\)
with the convention that \(D\in\mathcal D_0\) is sampled without inputs. The next
observation is
\(
o_t=\mathsf{Obs}(z_t;z_1,\ldots,z_{t-1}),
\)
and the next state is obtained by appending the new pair,
\(
h_{t+1}=\bigl(h_t,(z_t,o_t)\bigr).
\)
Thus the induced transition kernel is generally stochastic.

The objective corresponds to taking zero reward at intermediate rounds and a
terminal reward after the \(n\)-th query:
\(
R(h_{n+1}) := \max_{1\le j\le n} U(z_j).
\)
Then for every admissible policy \(\pi\),
\[
\mathbb E_{M_{\mathfrak G},\pi}[R(h_{n+1})]
=
\mathbb E\!\left[\max_{1\le j\le n} U(z_j)\right]
=
V_n^\pi(\mathfrak G).
\]
Consequently,
\[
V_n^*(\mathfrak G)=\sup_{\pi\in\Pi_{\mathfrak G}^{\mathrm{obs}}}V_n^\pi(\mathfrak G)
\]
is precisely the optimal value of the induced horizon-\(n\) process. Under full
feedback this is an MDP observed by the policy; under partial feedback, the same
full-history process is an MDP for the environment, while the policy observes
only the feedback history generated by \(\mathsf{Obs}\).

    \section{Proof of Theorem~\ref{thm:one-dimensional-gaussian}: A One-Dimensional Gaussian Example}
\label{app:one-dimensional-gaussian}

\onedimgaussian*

\begin{proof}
Fix the framework from Section~\ref{sec:one-dimensional-gaussian}. For a policy \(\pi\), let \(z_t\) be the point queried at time \(t\), and let \(M_t:=\max_{s\le t} z_s\). Since \(U(x)=x\), we have \(V_n^\pi(\mathfrak G)=\E M_n\).

We first prove the upper bound in the contractive case \(\mu<1\) and \(\lambda_1+\lambda_2<1\). Fix \(u>0\). Decompose according to the first time the running maximum exceeds \(u\):
\begin{equation}\label{eq:prob-Mn-gtr-u}
\mathbb P(M_n>u)
=
\sum_{t=1}^n
\mathbb P(M_t>u,\ M_{t-1}\le u),
\end{equation}
where \(M_0=-\infty\). Condition on the the filtration $\mathcal F_{t-1}$ generated by all random variables up to time $t-1$. Respectively denote the event that the \(t\)-th query is a generation, mutation or recombination by $\{$G,M,R$\}$.  Then, on a generation event, \(\mathbb P(z_t>u, \text{ G} \mid \mathcal F_{t-1})\le \exp(-u^2/2)\). If it is a mutation of some previously queried \(x\), then on the event \(M_{t-1}\le u\) we have \(x\le u\), and therefore
\[
\mathbb P(z_t>u, \text{ M} \mid \mathcal F_{t-1})
=
\mathbb P(\mu x+\sigma_M Z>u)
\le
\mathbb P\left(Z>\frac{1-\mu}{\sigma_M}u\right).
\]
If it is a recombination of \(x,y\), then on \(M_{t-1}\le u\) we have \(x,y\le u\), and since \(\lambda_1,\lambda_2\ge0\),
\[
\mathbb P(z_t>u, \text{ R} \mid \mathcal F_{t-1})
=
\mathbb P(\lambda_1x+\lambda_2y+\sigma_R Z>u)
\le
\mathbb P\left(Z>\frac{1-\lambda_1-\lambda_2}{\sigma_R}u\right).
\]
By the definition of \(\alpha\) and the Gaussian tail bound \(\mathbb P(Z>r)\le \exp(-r^2/2)\), each summand in \eqref{eq:prob-Mn-gtr-u} is at most \(\exp(-\alpha^2u^2/2)\). Hence
\[
\mathbb P(M_n>u)\le n\exp(-\alpha^2u^2/2).
\]
Let \(u_n=(1+\varepsilon)\alpha^{-1}\sqrt{2\log n}\). Then
\[
\E M_n
\le
u_n+\int_{u_n}^{\infty}\mathbb P(M_n>t)\,dt
\le
u_n+n\int_{u_n}^{\infty}\exp(-\alpha^2t^2/2)\,dt
=
(1+\varepsilon+o(1))\alpha^{-1}\sqrt{2\log n}.
\]
Since \(\varepsilon>0\) is arbitrary, \(V_n^*(\mathfrak G)\le(1+o(1))\alpha^{-1}\sqrt{2\log n}\).

We next prove the matching lower bound. If \(\alpha=1\), the generation-only policy gives \(n\) independent \(N(0,1)\) samples, so the standard Gaussian maximum asymptotics give \(\E M_n=(1+o(1))\sqrt{2\log n}\).

It remains to consider the case where the minimum in \(\alpha\) is attained by mutation or recombination. We treat both cases at once. Let \(q<1\) and \(\sigma>0\) denote the coefficient and noise level of the best contractive operator: for mutation, \(q=\mu\) and \(\sigma=\sigma_M\); for recombination, \(q=\lambda_1+\lambda_2\) and \(\sigma=\sigma_R\), using the same previous candidate twice. Thus \(\alpha=(1-q)/\sigma\).

Fix \(\varepsilon\in(0,1)\), and set \(u=(1-\varepsilon)\alpha^{-1}\sqrt{2\log n}\). Let \(L:=\lceil\log n\rceil\), \(m:=\lfloor(n-L)/L\rfloor\), and \(u_j:=ju/L\) for \(j=0,\ldots,L\). The policy first makes \(L\) generation queries. With probability at least \(1-2^{-L}\), their maximum is nonnegative; on this event set the current \textit{candidate} to be any generated point with value at least \(u_0=0\).

The policy then climbs the levels \(u_1,\ldots,u_L\). At level \(j\), assuming the current candidate has value at least \(u_{j-1}\), it applies the chosen operator \(m\) times to the current candidate. In the recombination case it uses the current candidate as both parents. If any of the \(m\) outputs has value at least \(u_j\), it sets the current candidate to the first such output and moves to level \(j+1\). Otherwise the policy fails.

Conditional on reaching level \(j-1\), each attempt at level \(j\) succeeds with probability at least
\[
\mathbb P(q u_{j-1}+\sigma Z\ge u_j)
=
\mathbb P\left(Z\ge \frac{u_j-q u_{j-1}}{\sigma}\right).
\]
For every \(j\le L\),
\[
\frac{u_j-q u_{j-1}}{\sigma}
=
\frac{(1-q)u_{j-1}+u/L}{\sigma}
\le
\alpha u+\frac{u}{\sigma L}
=
(1-\varepsilon)\sqrt{2\log n}+o(1)
\le
(1-\varepsilon/2)\sqrt{2\log n}
\]
for all sufficiently large \(n\). Therefore, uniformly over \(j\),
\[
\mathbb P(q u_{j-1}+\sigma Z\ge u_j)
\ge
\mathbb P\left(Z\ge (1-\varepsilon/2)\sqrt{2\log n}\right)
\ge
n^{-(1-\varepsilon/2)^2+o(1)}.
\]
Since \(m=n^{1+o(1)}\), the probability of failing at a fixed level is at most
\[
\exp\left(-n^{1-(1-\varepsilon/2)^2+o(1)}\right)=o(1/L).
\]
A union bound over the \(L\) levels shows that the policy reaches \(u_L=u\) with probability \(1-o(1)\). Also, the negative part of the maximum after the initial \(L\) generations is negligible: if \(G_L\) is the maximum of those generations, then
\[
\E[-G_L\mathbf 1\{G_L<0\}]
=
\int_0^\infty \mathbb P(G_L<-t)\,dt
=
\int_0^\infty \Phi(-t)^L\,dt
=o(\sqrt{\log n}).
\]
Thus this policy satisfies \(\E M_n\ge (1-o(1))u\). Since \(\varepsilon>0\) was arbitrary, \(V_n^*(\mathfrak G)\ge(1-o(1))\alpha^{-1}\sqrt{2\log n}\). This completes the proof of the tight asymptotic in the contractive case.

We now prove the two qualitative lower bounds for non-contractive operators. 

If \(\mu=1\), generate one initial candidate. Then repeatedly make two mutation queries from the current candidate and keep the larger of the two children. Each two-query block increases the current candidate in expectation by \(\sigma_M\E\max\{Z_1,Z_2\}=\sigma_M/\sqrt{\pi}\). Hence \(V_n^*(\mathfrak G)\ge \Omega(n)\). The same argument applies when \(\lambda_1+\lambda_2=1\), by recombining the current candidate with itself and using \(\sigma_R\) in place of \(\sigma_M\).

Finally, suppose \(\mu>1\). Generate two initial candidates and keep the larger one, whose expectation is positive. Then repeatedly mutate the current candidate. If \(x_t\) denotes the current candidate after \(t\) such mutations, then \(\E x_t=\mu^t\E x_0\), so \(V_n^*(\mathfrak G)\ge \exp(\Omega(n))\). The same argument applies when \(\lambda_1+\lambda_2>1\), again by recombining the current candidate with itself. This proves the theorem.
\end{proof}
	
	\section{Proof of Theorem~\ref{thm:all-three-short}: All Three Operators are Necessary}
    \label{app:allOps}

    In this section, we prove Theorem~\ref{thm:all-three-short} showing all three operators are essential. 
	
	\threeOps*

	\begin{proof} The GMR framework $\mathfrak G = (X,U, (\mathcal D_k)_{k \ge 0}, \mathsf{Obs})$ is defined as follows.
	Throughout, policies are allowed full feedback: after each query \(z_t\), they observe \((z_t,U(z_t))\). 
	Fix positive integers $G,R,M$ and set \(N:=10G+R+M\).
	Set
	\[
	X:=[0,1]^2,\qquad
	\eta:=\frac{1}{2M}.
	\]
	Let \(\Lambda:=2^{R+10}\).

	Let \(\mathcal D_0=\{D_{\mathrm{gen}}\}\), where
	$
	D_{\mathrm{gen}}((0,0))=1-\frac{1}{2G}$ and
	$D_{\mathrm{gen}}((1,0))=\frac{1}{2G}$. 
	Let \(\mathcal D_1=\{M_\uparrow\}\), where
	\(
	M_\uparrow(x,y):=\bigl(x,\min\{y+\eta,1\}\bigr),
	\)
	and let \(\mathcal D_2=\{R_{\mathrm{mid}}\}\), where
	\[
	R_{\mathrm{mid}}\bigl((x_1,y_1),(x_2,y_2)\bigr):=\Bigl(\frac{x_1+x_2}{2},0\Bigr).
	\]
	For \(k\ge3\), set \(\mathcal D_k=\varnothing\).
	
	Define
	\[
	x_j:=2^{-j},\qquad a_j:=(x_j,0),\qquad j=1,\ldots,R,
	\]
	and let \(b:=(x_R,1/2)\). The hidden objective \(U:X\to[0,3]\) is
	\[
	U(x,y):=
	3\cdot 2^{-\Lambda |x-x_R|}\cdot \max\Bigl\{0,\,1-\Bigl|y-\frac12\Bigr|\Bigr\}.
	\]
	
	We first prove item~(1). Consider the following policy \(\pi^\star\). It first performs \(10G\) generations. If both \((0,0)\) and \((1,0)\) have appeared, it keeps one copy of each. It then queries
	\[
	a_1=R_{\mathrm{mid}}\bigl((0,0),(1,0)\bigr)=\Bigl(\frac12,0\Bigr).
	\]
	For \(j=2,\dots,R\), it recursively queries
	\[
	a_j=R_{\mathrm{mid}}\bigl((0,0),a_{j-1}\bigr).
	\]
	A simple induction shows that these points are exactly \(a_j=(2^{-j},0)\), for \(j=1,\ldots,R\). The policy then applies \(M_\uparrow\) exactly \(M\) times to \(a_R\). Since each mutation increases the \(y\)-coordinate by \(\eta=1/(2M)\), after \(M\) mutations it reaches \(b=(x_R,1/2)\), where \(U(b)=3\).
	
	This policy uses at most \(10G\) generations, \(R\) recombinations, and \(M\) mutations, hence at most \(10G+R+M=N\) queries in total.
	
	Its failure event is that at least one of \((0,0)\) and \((1,0)\) is not seen in the first \(10G\) generations. By the union bound,
	\[
	\Pr[\mathrm{failure}]
	\le
	\left(1-\frac{1}{2G}\right)^{10G}
	+
	\left(\frac{1}{2G}\right)^{10G}
	\le e^{-5}+2^{-10}.
	\]
	Therefore,
	\[
	V_N^{\pi^\star}(\mathfrak G)\ge 3(1-e^{-5}-2^{-10}).
	\]
	Since \(U\) is bounded from above by \(3\), we have \(V_N^*(\mathfrak G)\le 3\). Hence
	\[
	V_N^*(\mathfrak G)-V_N^{\pi^\star}(\mathfrak G)
	\le 3e^{-5}+3\cdot 2^{-10}<\frac{1}{10}.
	\]
	Thus \(N_{1/10}(\pi^\star,N;\mathfrak G)\le N\), and so \(N_{1/10}^*(N;\mathfrak G)\le N\).
	
	We now prove item~(2). Set \(c:=1/100\).
	
	\medskip
	\noindent\textbf{Case 1: too few generations.}
	Assume that \(\pi\) uses fewer than \(cG\) generations in total on every execution path. Fix any \(m\in\mathbb N\). Since each generation outputs \((1,0)\) with probability \(1/(2G)\), the probability that \(\pi\) has seen \((1,0)\) by time \(m\) is at most \(c/2\) by the union bound.
	
	On the complementary event, every generation outputs \((0,0)\). Then every queried point has \(x\)-coordinate equal to \(0\): mutation does not change the \(x\)-coordinate, and midpoint recombination of points with \(x\)-coordinate \(0\) again has \(x\)-coordinate \(0\). Therefore, for every queried point \((x,y)\) on this event,
	\[
	U(x,y)\le 3\cdot 2^{-\Lambda x_R}
	\le 3\cdot 2^{-\Lambda 2^{-R}}
	\le 3\cdot 2^{-1024}.
	\]
	Hence
	\[
	V_m^\pi(\mathfrak G)
	\le 3\cdot \frac c2 + 3\cdot 2^{-1024}.
	\]
	Using \(V_N^*(\mathfrak G)\ge 3(1-e^{-5}-2^{-10})\), we get
	\[
	V_N^*(\mathfrak G)-V_m^\pi(\mathfrak G)
	\ge
	3(1-e^{-5}-2^{-10})-\frac{3c}{2}-3\cdot 2^{-1024}
	>\frac14.
	\]
	Since \(m\in\mathbb N\) was arbitrary, no finite number of queries can achieve the \(1/4\)-approximation target. Therefore
	\[
	N_{1/4}(\pi,N;\mathfrak G)=\infty .
	\]
	
	\medskip
	\noindent\textbf{Case 2: too few recombinations.}
	Assume that \(\pi\) uses fewer than \(cR\) recombinations in total on every execution path. Let \(L:=\lfloor cR\rfloor\). Fix any \(m\in\mathbb N\).
	
	We claim that every queried point by time \(m\) has \(x\)-coordinate equal to a dyadic rational whose denominator divides \(2^L\). Indeed, generated points have \(x\)-coordinates \(0\) and \(1\), so this is true at time \(0\). Mutation does not change the \(x\)-coordinate. If two queried points have \(x\)-coordinates with denominators dividing \(2^\ell\), then their midpoint has denominator dividing \(2^{\ell+1}\). Since along any execution path there are at most \(L\) recombination calls in total, an induction on time gives the claim.
	
	Since \(c<1\), we have \(L<R\). The target coordinate \(x_R=2^{-R}\) has exact denominator \(2^R\), so it cannot equal any dyadic rational whose denominator divides \(2^L\). Moreover, for every such dyadic \(z\), we have \(|z-x_R|\ge 2^{-R}\). Hence, for every queried point \((x,y)\),
	\[
	U(x,y)\le 3\cdot 2^{-\Lambda 2^{-R}}
	\le 3\cdot 2^{-1024}.
	\]
	Thus
	\[
	V_m^\pi(\mathfrak G)\le 3\cdot 2^{-1024}.
	\]
	Using again \(V_N^*(\mathfrak G)\ge 3(1-e^{-5}-2^{-10})\), we obtain
	\[
	V_N^*(\mathfrak G)-V_m^\pi(\mathfrak G)
	\ge 3(1-e^{-5}-2^{-10})-3\cdot 2^{-1024}>\frac14.
	\]
    By the same argument as in case 1,
	\(
	N_{1/4}(\pi,N;\mathfrak G)=\infty .
	\)
	
	\medskip
	\noindent\textbf{Case 3: too few mutations.}
	Assume that \(\pi\) uses fewer than \(cM\) mutations in total on every execution path. Since every generation and every recombination produces a point with \(y\)-coordinate \(0\), and each mutation increases the \(y\)-coordinate by exactly \(1/(2M)\), every queried point has \(y\le c/2\). Therefore, for every queried point \((x,y)\),
	\[
	U(x,y)
	\le
	3\left(1-\left|\frac12-y\right|\right)
	\le
	\frac{3(1+c)}{2}.
	\]
	Hence \(V_m^\pi(\mathfrak G)\le 3(1+c)/2\) for every \(m\). Using again that
	\(
	V_N^*(\mathfrak G)\ge 3(1-e^{-5}-2^{-10}),
	\)
	we get
	\[
	V_N^*(\mathfrak G)-V_m^\pi(\mathfrak G)
	\ge
	3(1-e^{-5}-2^{-10})-\frac{3(1+c)}{2}
	>
	\frac14.
	\]
	By the same argument as in cases 1 and 2,
	\(
	N_{1/4}(\pi,N;\mathfrak G)=\infty .
	\)
	
	This proves the theorem.
\end{proof}

The next proposition shows that the necessity result above is genuinely tied to
richer, higher-dimensional GMR structure. In a natural one-dimensional analogue,
under the same full-feedback query-complexity formulation, either mutation alone
or recombination alone already suffices up to constant factors.

\begin{proposition}[One-dimensional contrast]
	There exists a universal constant $C>0$ such that for every \(n\ge 2\) and every
	\(1\)-Lipschitz objective \(U:[0,1]\to[0,1]\), there is a one-dimensional
	\textnormal{GMR} framework \(\mathfrak G_{n,U}\) with objective \(U\) for which,
	under full feedback,
	\[
	N_{C/n}(\pi,n;\mathfrak G_{n,U})\le Cn
	\]
	for one policy \(\pi\) using no recombinations, and also for another policy
	\(\pi\) using no mutations.
\end{proposition}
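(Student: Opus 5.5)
We construct the framework \(\mathfrak G_{n,U}\) ourselves, and choose it so that mutation alone and recombination alone can each scan a grid of mesh about \(1/n\) with \(O(n)\) queries. Take \(X=[0,1]\), full feedback, \(K=\{0,1,2\}\), and \(\mathcal D_0=\{\delta_0\}\), so generation deterministically returns the point \(0\). For mutation, let \(\mathcal D_1=\{M_+\}\) with \(M_+(x):=\min\{x+1/n,1\}\). For recombination, let \(\mathcal D_2=\{R_+\}\) with \(R_+(x,y):=\min\{\max\{x,y\}+1/n,1\}\). All three rules are deterministic. The proof then has three steps.

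\emph{Step 1: upper bound on the benchmark.} We need a bound on \(V_n^*(\mathfrak G_{n,U})\). Every operator is deterministic and moves at most \(1/n\) beyond the current maximum coordinate. By induction on the number of queries, any policy's first \(n\) queries therefore lie in \(\{0,1/n,\dots,(n-1)/n\}\), which is contained in the grid \(S:=\{k/n:0\le k\le n\}\). Hence
\[
V_n^*\le \max_{s\in S}U(s)\le \sup_{[0,1]}U.
\]
We will only need the crude bound \(V_n^*\le\sup U\).

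\emph{Step 2: the two scanning policies.} The mutation-only policy generates once and then applies \(M_+\) repeatedly to its last point. After \(n+1\) queries it has visited every point of \(S\). The recombination-only policy generates once and then applies \(R_+(z_{t-1},z_{t-1})\). This visits the same grid with the same number of queries and uses no mutations.

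\emph{Step 3: approximation guarantee.} Every \(x\in[0,1]\) is within \(1/(2n)\) of \(S\), and \(U\) is \(1\)-Lipschitz. So after \(m=n+1\le 2n\) queries each policy has
\[
V_m^\pi\ge \sup U-\tfrac1{2n}\ge V_n^*-\tfrac{1}{n}.
\]
This gives \(N_{C/n}(\pi,n)\le Cn\) with \(C=2\).

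The main obstacle is purely definitional. One must check that the statement allows the framework to depend on \(n\) and \(U\), as the notation \(\mathfrak G_{n,U}\) indicates, so that the step size \(1/n\) may be built into the operators. One must also confirm that the recombination may be an optimization-guided rule like \(R_+\) rather than an averaging rule. If a more ``natural'' recombination such as the midpoint of Theorem~\ref{thm:all-three-short} is preferred, I would instead generate \(\{0,1\}\) and use dyadic bisection. Bisection reaches mesh \(2^{-\lceil\log_2 n\rceil}\), but covering all dyadic points of that mesh costs \(O(n)\) midpoint queries, which is still within \(Cn\).
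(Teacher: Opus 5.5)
Your proof is correct for the statement as written, but it takes a different route from the paper. The paper reuses the operator types of Theorem~\ref{thm:all-three-short}: generation is $\tfrac12\delta_0+\tfrac12\delta_1$, mutations are $M_\pm(x)=x\pm 1/n$ (clipped), and recombination is the midpoint $R_{\mathrm{mid}}$. Its mutation-only policy walks from whichever endpoint was generated toward the other, which is why it needs both $M_+$ and $M_-$. Its recombination-only policy first makes $\lceil\log_2(2n)\rceil$ generations, so that both endpoints appear with probability at least $1-1/n$. It then fills in all $2^k-1\le 2n-1$ dyadic points of level $k=\lceil\log_2 n\rceil$. The failure event costs an extra $1/n$, which is why the paper ends up with $C=4$.

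You instead make generation deterministic and choose $R_+(x,y)=\min\{\max\{x,y\}+1/n,1\}$. Then $R_+(x,x)=M_+(x)$, so your recombination-only scan is literally the mutation-only scan relabeled. This gives a deterministic, failure-free argument with $C=2$. In fact your scan attains $V_n^*$ exactly within $n$ queries, since no policy in your framework can leave $\{0,\dots,(n-1)/n\}$ in $n$ rounds.

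What your route gives up is the content of the proposition as a contrast. The paper wants a one-dimensional analogue built from the same kind of operators as the two-dimensional construction, so that the necessity phenomenon of Theorem~\ref{thm:all-three-short} is attributable to the richer structure rather than to the choice of operators. With $R_+$, recombination is mutation in disguise, and the claim becomes nearly tautological.

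Your fallback sketch with midpoint bisection matches the paper's proof in spirit, but it omits one step. A single generation rule producing both $0$ and $1$ must be random. You would then need to generate $O(\log n)$ times and absorb the probability that an endpoint is missed, as the paper does.
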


\begin{proof}
	Take $C:=4$ and fix \(n\ge2\) and a \(1\)-Lipschitz objective
	\(U:[0,1]\to[0,1]\). Let \(X=[0,1]\), let
	$D_{\mathrm{gen}}=\tfrac12\delta_0+\tfrac12\delta_1$, let
	$M_+(x):=\min\{x+1/n,1\}$ and $M_-(x):=\max\{x-1/n,0\}$, and let
	$R_{\mathrm{mid}}(x,y):=(x+y)/2$. Define
	\(\mathfrak G_{n,U}=(X,U,(\mathcal D_k)_{k\ge0},\mathsf{Obs})\) by
	\[
	\mathcal D_0=\{D_{\mathrm{gen}}\},\qquad
	\mathcal D_1=\{M_+,M_-\},\qquad
	\mathcal D_2=\{R_{\mathrm{mid}}\},\qquad
	\mathcal D_k=\varnothing \quad \text{for } k\ge3,
	\]
	with full feedback
	\[
	\mathsf{Obs}(z;z_1,\ldots,z_{t-1})=(z,U(z)).
	\]
	
	A mutation-only policy first generates one endpoint and then repeatedly applies
	$M_+$ or $M_-$ toward the other endpoint. After at most \(n+1\le4n\) queries it
	has evaluated the grid \(\{j/n:0\le j\le n\}\).

	A recombination-only policy first makes
	\(m_0:=\lceil\log_2(2n)\rceil\) generation queries. With probability at least
	\(1-1/n\), both endpoints appear. Conditioned on this event, repeated midpoint
	recombination for \(k:=\lceil\log_2 n\rceil\) dyadic levels evaluates a grid of
	mesh at most \(1/n\), using \(2^k-1\le2n-1\) recombination queries. Hence the total
	number of queries is at most \(4n\).

	For the fixed objective \(U\), a \(1/n\)-net contains a point \(z\) with
	\(U(z)\ge \max_x U(x)-1/n\ge V_n^*(\mathfrak G_{n,U})-1/n\). Thus the
	mutation-only policy is within \(1/n\) of the benchmark. The recombination-only
	policy has the same guarantee on the success event and loses at most another
	\(1/n\) from the failure event, since \(U\in[0,1]\). Therefore both policies are
	within \(4/n\) of \(V_n^*(\mathfrak G_{n,U})\) using at most \(4n\) queries, which
	proves the claim.
\end{proof}

\end{document}